\theoremstyle{plain}
\newtheorem{thm}{Theorem}
\newtheorem{asu}{Assumption}
\newtheorem{lem}{Lemma}
\newtheorem{cor}[thm]{Corollary}
\newtheorem{defn}{Definition}
\theoremstyle{remark}
\DeclareMathOperator*{\argmin}{argmin}
\def \x {\mathbf{x}}
\def \u {\mathbf{u}}
\def \1 {\left(}
\def \2 {\right)}
\title{Adaptivity and Optimality: A Universal Algorithm for \\ Online Convex Optimization}
\author{Guanghui Wang,\  Shiyin Lu,\  Lijun Zhang\\
National Key Laboratory for Novel Software Technology, Nanjing University, China\\
\texttt{\{wanggh, lusy,zhanglj\}@lamda.nju.edu.cn}} 
\begin{document}
\maketitle
\begin{abstract}
In this paper, we study adaptive online convex optimization, and aim to design a universal algorithm that achieves optimal regret bounds for multiple common types of loss functions. Existing universal methods are limited in the sense that they are optimal for only a subclass of loss functions. To address this limitation, we propose a novel online method, namely Maler, which enjoys the optimal $O(\sqrt{T})$, $O(d\log T)$ and $O(\log T)$ regret bounds for general convex, exponentially concave, and strongly convex functions respectively. The essential idea is to run multiple types of learning algorithms with different learning rates in parallel, and utilize a meta algorithm to track the best one on the fly. Empirical results demonstrate the effectiveness of our method.
\end{abstract}
\section{Introduction}
Online convex optimization (OCO) is a well-established paradigm for modeling sequential decision making \citep{shalev2012online}. The protocol of OCO is as follows: in each round $t$, firstly a learner chooses an action $\mathbf{x}_t$ from a convex set $\mathcal{D}\subseteq\mathbb{R}^d$, at the same time, an adversary reveals a loss function $f_t(\cdot):\mathcal{D}\mapsto\mathbb{R}$, and consequently the learner suffers a loss $f_t(\mathbf{x}_t)$. The goal is to minimize regret, defined as the difference between the cumulative loss of the learner and that of the best action in hindsight \citep{hazan2016introduction}:\vspace{-.09cm}
\begin{equation}
\label{defn:regret}
R(T)=\sum_{t=1}^Tf_t(\mathbf{x}_t)-\min_{\mathbf{x}\in\mathcal{D}}\sum_{t=1}^Tf_t(\mathbf{x}).
\vspace{-.12cm}
\end{equation}
There exist plenty of algorithms for OCO, based on different assumptions about the  loss functions. Without any assumptions beyond convexity and Lipschitz continuity, the classic  Online gradient descent (OGD) with step size on the order of $O(1/\sqrt{t})$ (referred to as convex OGD) guarantees an $O(\sqrt{T})$ regret bound \citep{zinkevich2003online}, where $T$ is the time horizon. While it has been proved minimax optimal for arbitrary convex functions \citep{abernethy2009stochastic}, tighter bounds are still achievable when loss functions are known belong to some easier  categories \emph{in advance}. In particular, for \emph{strongly} convex functions, OGD with step size proportional to $O(1/{t})$ (referred to as strongly convex OGD) achieves an $O(\log T)$ regret bound \citep{hazan2007logarithmic};
for \emph{exponentially concave} functions, the state-of-the-art algorithm is Online Newton step (ONS) \citep{hazan2007logarithmic}, which enjoys an $O(d\log T)$ regret bound, where $d$ is the dimensionality.

This divides OCO into subclasses, relying on manual selections on which algorithm to use for the specific settings. Such requirements, not only are a burden to users, but also hinder the applications to broad domains where the loss functions are on the fly and choosing the right algorithm beforehand is impossible. These issues motivate the innovation of \emph{adaptive} algorithms, which aim to guarantee optimal regret bounds for arbitrary  convex functions, and {automatically} exploit  easier functions whenever possible. The seminal work of  \cite{hazan2008adaptive} propose Adaptive online gradient descent (AOGD), which attains $O(\sqrt{T})$ and $O(\log T)$ regret bounds for convex and strongly convex functions respectively. However, AOGD requires a curvature information of $f_t$ as input in each round, and fails to provide logarithmic regret bound for exponentially concave functions. Another milestone is MetaGrad \citep{van2016metagrad}, which only requires the gradient information, and achieves $O(\sqrt{T\log\log T})$ and $O(d\log T)$ regret bounds for convex and exponentially  concave functions respectively. Although it also implies an $O(d\log T)$ regret for strongly convex functions, there still exists a large $O(d)$ gap from the optimal $O(\log{T})$ regret bound.

Along this line of research, it is therefore natural to ask whether both adaptivity and optimality can be attained simultaneously, or there is an inevitable price in regret to be paid for adaptivity, which was also posed as an open question by \cite{van2016metagrad}. In this paper, we give an affirmative answer by developing a novel online method, namely Maler, which achieves the  optimal regret bounds for all aforementioned three types of loss functions. Inspired by MetaGrad, our method runs multiple expert algorithms in parallel, each of which is configured with a different learning rate,  and employs a meta algorithm to track the best on the fly. However, different from MetaGrad where experts are the same type of OCO algorithms (i.e., a variant of ONS), experts in Maler consists of various types of OCO algorithms (i.e., convex OGD, ONS and strongly convex OGD). Essentially, the goal of MetaGrad is to learn only the optimal learning rate. In contrast, Maler searches for the best OCO algorithm and the optimal learning rate simultaneously. Theoretical analysis shows that, with $O(\log T)$ experts, which is of the same order of that in MetaGrad, Maler achieves $O(\sqrt{T})$, $O(d\log T)$ and $O(\log T)$ regret bounds for convex, exponentially  concave and strongly convex functions respectively. Moreover, we also establish a new type of data-dependent regret bound for Maler, and show that it is no worse than its counterpart of MetaGrad, and in some favorable cases, better. Finally, we conduct experiments on both synthetic and real-world datasets to demonstrate the advantages of our method.

\paragraph{Notation.} Throughout the paper, we use lower case bold face letters to denote vectors, lower case letters to denote scalars, and upper case letters to denote matrices. We use $\|\cdot\|$ to denote the $\ell_2$-norm. For a positive definite matrix $H\in\mathbb{R}^{d\times d}$, the weighted $\ell_2$-norm is denoted by $\|\x\|_H^2=\x^{\top}H\x$. The $H$-weighted projection $\Pi_{\mathcal{D}}^{H}(\x)$ of $\x$ onto $\mathcal{D}$ is defined as $\Pi_{\mathcal{D}}^{H}(\x)=\argmin_{\mathbf{y}\in\mathcal{D}}\|\mathbf{y}-\mathbf{x}\|^2_H$. We denote the gradient of $f_t(\cdot)$ at $\x_t$ as $\mathbf{g}_t$, and the best decision in hindsight as $\x_*=\max\limits_{\x\in\mathcal{D}}\sum_{t=1}^Tf_t(\x)$.

\section{Related Work}
In the literature, there exist various of algorithms for OCO  targeting on a specific type of loss functions. For general convex and strongly convex loss functions, the classic OGD with step size on the order of $O({1}/{\sqrt{t}})$ and $O({1}/{t})$ achieve $O(\sqrt{T})$ and $O(\log T)$ regrets, respectively \citep{zinkevich2003online,hazan2007logarithmic}. For exponentially concave functions, Online Newton step (ONS) attains a regret bound of  $O(d\log T)$ \citep{hazan2007logarithmic}. The above bounds are known to be minimax optimal as matching lower bounds have been established \citep{abernethy2009stochastic,hazan2010optimal}.

To simultaneously deal with multiple types of loss functions, \cite{hazan2008adaptive} propose Adaptive online gradient descent (AOGD), which is later  extended to proximal settings by \cite{do2009proximal}. Both algorithms achieve $O(\sqrt{T})$ and $O(\log{T})$ regret bounds for convex and strongly convex loss functions respectively. Moreover, they have shown superiority over non-adaptive methods in the experiments \citep{do2009proximal}. However, in each round $t$ these algorithms have to be fed with a parameter which depends on the curvature information of $f_t(\cdot)$ at $\x_t$, and cannot achieve the logarithmic regret bound for exponentially concave cases. To address these limitations, \cite{van2016metagrad} propose the Multiple Eta Gradient (MetaGrad), whose basic idea is to run a bunch of ONS algorithms with different learning rates as experts, and then combine them using an expert-tracking algorithm. They show that the regret of MetaGrad for arbitrary convex functions can be simultaneously  bounded by a worst-case bound of $O(\sqrt{T\log\log T})$, and a data-dependant bound of  $O(\sqrt{V_T^{\ell}\ln T})$, where $V^{\ell}_T=\sum_{t=1}^T((\x_*-\x_t)^{\top}\mathbf{g}_t))^2.$ In particular, for strongly convex and exponentially concave functions, the data-dependant bound reduces to $O(d\log T)$.

The above works as well as this paper focus on adapting to different types of loss functions. A related but parallel direction is adapting to structures in \emph{data}, such as low-rank and sparsity. This line of research includes Adagrad \citep{duchi2011adaptive}, RMSprop \citep{tieleman2012lecture}, and Adam \citep{reddi2018convergence}, to name a few. The main idea here is to utilize the gradients observed over time to  dynamically adjust the learning rate or the update direction of gradient descent, and their regret bounds depend on the cumulation of gradients. For general convex functions, the bounds attain $O(\sqrt{T})$ in the worst-case, and become tighter when the gradients are sparse.

Another different direction considers adapting to \emph{changing environments}, where some more stringent criteria are established to measure the performance of algorithms, such as dynamic regret \citep{zinkevich2003online,Dynamic:ICML:13,Adaptive:Dynamic:Regret:NIPS}, which compares the cumulative loss of the learner against any sequence of comparators, and adaptive regret \citep{hazan2007adaptive,daniely2015strongly,jun2016improved,IJCAI:2018:Wang}, which is defined as the maximum regret over any contiguous time interval. In this paper we mainly focus on the minimization of regret, and it an interesting question to explore whether our method can be extended to adaptive and dynamic regret.
\section{Maler}
In this section, we first state assumptions made in this paper, then provide our motivations, and finally present the proposed algorithm as well as its theoretical guarantees.
\subsection{Assumptions and Definitions}
Following previous studies, we introduce some standard assumptions \citep{van2016metagrad} and definitions \citep{boyd2004convex}.
\begin{asu}
\label{asu1}
The gradients of all loss functions $f_1,\dots,f_T$ are bounded by \emph{$G$}, i.e., $\forall t \in \{1,\dots,T\}$, \emph{$\max\limits_{\mathbf{x}\in\mathcal{D}}\|\nabla f_t(\x)\|\leq G$}.
\end{asu}
\begin{asu}
\label{asu2}
The diameter of the decision set is bounded by $D$, i.e., $\max\limits_{\mathbf{x}_1, \mathbf{x}_2\in\mathcal{D}}$ $\|\mathbf{x}_1-\mathbf{x}_2\|\leq D$.
\end{asu}
\begin{defn}
A function $f:\mathcal{D}\mapsto \mathbb{R}$ is convex if  \emph{
\begin{equation}
\begin{split}
\label{defn:convex}
f(\mathbf{x}_1)\geq f(\mathbf{x}_2)+\nabla f(\mathbf{x}_2)^{\top}(\mathbf{x}_1-\mathbf{x}_2), \forall \mathbf{x}_1, \mathbf{x}_2\in\mathcal{D}.
\end{split}
\end{equation}}
\end{defn}
\begin{defn}
\label{defn:stconvex}
A function $f:\mathcal{D}\mapsto \mathbb{R}$ is $\lambda$-strongly convex if  \emph{$\forall \mathbf{x}_1, \mathbf{x}_2\in\mathcal{D}$,
\begin{equation*}
\begin{split}
f(\mathbf{x}_1)\geq f(\mathbf{x}_2)+ \nabla f(\mathbf{x}_2)^{\top}(\mathbf{x}_1-\mathbf{x}_2)+\frac{\lambda}{2}\|\mathbf{x}_1-\mathbf{x}_2\|^2.
\end{split}
\end{equation*}}
\end{defn}
\begin{defn}
\label{def:exp}
A function $f:\mathcal{D}\mapsto \mathbb{R}$ is $\alpha$-exponentially  concave (abbreviated to $\alpha$-exp-concave) if ${\rm exp}(-\alpha f(\mathbf{x}))$ is concave.
\end{defn}

\subsection{Motivation}
\label{Moti}
Our algorithm is inspired by MetaGrad. To help understanding, we first give a brief introduction to the intuition behind this algorithm. Specifically, MetaGrad introduces the following surrogate loss function parameterized by $\eta>0$:
\begin{equation}
\label{elll}
\ell^{\eta}_t(\mathbf{x})=-\eta(\mathbf{x}_t-\mathbf{x})^{\top}\mathbf{g}_t+\eta^2(\mathbf{x}-\mathbf{x}_t)^{\top}\mathbf{g}_t\mathbf{g}_t^{\top}(\mathbf{x}-\mathbf{x}_t).
\end{equation}
The first advantage of the above definition is that $\ell_t^{\eta}$ is exp-concave. Thus, we can apply ONS on $\ell_t^{\eta}$ and obtain the following regret bound with respect to $\ell_t^{\eta}$:
\begin{equation}
\label{eq:adv:1}
\sum_{t=1}^T\ell^{\eta}_t(\mathbf{x}_t)-\min_{\x\in{\mathcal{D}}}\sum_{t=1}^T\ell^{\eta}_t(\mathbf{x})\leq O(d\log T)
\end{equation}
The second advantage is that the regret with respect to the original loss function $f_t$ can be upper bounded in terms of the regret with respect to the defined surrogate loss function $\ell_t^{\eta}$:
\begin{equation}
\label{eq:adv:2}
R(T)\leq \frac{\sum_{t=1}^T\ell^{\eta}_t(\x_t)-\min_{\x\in{\mathcal{D}}}\sum_{t=1}^T\ell^{\eta}_t(\mathbf{x})}{\eta}+\eta V^{\ell}_T
\end{equation}
where $V^{\ell}_T=\sum_{t=1}^T((\x_t-\x_*)^{\top} \mathbf{g}_t)^2.$
Both advantages jointly (i.e., combining (\ref{eq:adv:1}) and (\ref{eq:adv:2})) lead to a regret bound of $O((d \log{T})/\eta + \eta V_T^\ell)$. Therefore, had we known the value of $V_T^\ell$ in advance, we could set $\eta$ as $\Theta(\sqrt{{d\log T}/V_T^{\ell}})$ and obtain the optimal regret bound of $O(\sqrt{{dV_T^{\ell}\log T}})$. However, this is impossible since $V_T^{\ell}$ depends on the whole learning process. To sidestep this obstacle, MetaGrad maintains multiple ONS in parallel each of which targets minimizing the regret with respect to the surrogate loss $\ell_t^{\eta}$ with a different $\eta$, and employs a meta algorithm to track the ONS with the best $\eta$. Theoretical analysis shows that MetaGrad achieves the desired $O(\sqrt{{dV_T^{\ell}\log T}})$ bound.

While the $O(\sqrt{{dV_T^{\ell}\log T}})$ regret bound of MetaGrad can reduce to $O({d\log T})$ for exp-concave functions, it can not recover the $O(\log T)$ regret bound for strongly convex functions. To address this limitation, we propose a new type of surrogate loss function:
\begin{equation}
\label{defn:surrogates}
s^{\eta}_t(\mathbf{x})=-\eta(\mathbf{x}_t-\mathbf{x})^{\top}\mathbf{g}_t+\eta^2G^2\|\mathbf{x}_t-\mathbf{x}\|^2.
\end{equation}
The main advantage of $s^{\eta}_t$ over $\ell^{\eta}_t$ is the strong convexity, which allows us to adopt a strongly convex OGD that takes $s^{\eta}_t$ as the objective loss function and attains
an $O(\log{T})$ regret with respect to $s^{\eta}_t$. On the other hand, the ``upper-bound" property in (\ref{eq:adv:2}) is preserved in the sense that the regret with respect to the original loss $f_t$ can be upper bounded by:
\begin{equation*}
R(T)\leq \frac{\sum_{t=1}^Ts^{\eta}_t(\x_t)-\min_{\x\in{\mathcal{D}}}\sum_{t=1}^Ts^{\eta}_t(\x)}{\eta}+\eta V^{s}_T
\end{equation*}
where $V_T^s=\sum_{t=1}^TG^2\|\x_t-\x_*\|^2$. Thus, the employed strongly convex OGD enjoys a novel data-dependant $O((\log{T})/\eta + \eta V_T^s)$ regret with respect to $f_t$, removing the undesirable factor of $d$. To optimize this bound to $O(\sqrt{V_T^{s}\log T})$, we follow the idea of MetaGrad and run many instances of strongly convex OGD.

Finally, to obtain the optimal $O(\sqrt{T})$ regret bound for general covnex functions, we also introduce a linear surrogate loss function as follows:
\begin{equation}
\label{defn:surrogatec}
c_t(\mathbf{x})=-\eta^c(\mathbf{x}_t-\mathbf{x})^{\top}\mathbf{g}_t+\left(\eta^c GD\right)^2.
\end{equation}
It can be proved that if we run a convex OGD with $c_t(\cdot)$ as the input, its regret with respect to the original loss function $f_t(\cdot)$ can be bounded by $O(1/\eta^c + \eta^c T(GD)^2)$. Here, the optimal $\eta$ is $\Theta(1/(DG\sqrt{T}))$, which depends on only known quantities and thus can be tuned beforehand.

While the idea of incorporating new types of surrogate loss functions to enhance the adaptivity is easy to comprehend, the specific definitions of the two proposed surrogate loss functions in (\ref{defn:surrogates}) and (\ref{defn:surrogatec}) are more involved. In fact, the proposed functions are carefully designed such that besides the aforementioned properties, they also satisfies that
for $\eta\in[0,\frac{2}{3DG}]$,
\begin{equation*}
\exp(-s_t^{\eta}(\x))\leq \exp(-\ell_t^{\eta}(\x))\leq 1+\eta (\x_t-\x)^{\top}\mathbf{g}_t
\end{equation*}
and for $\eta^c=1/(2DG\sqrt{T})$,
\begin{equation*}
\exp(-c_t(\x))\leq 1+\eta^c (\x_t-\x)^{\top}\mathbf{g}_t
\end{equation*}
which are critical to keep the regret caused by the meta algorithm under control and will be made clear in Section \ref{41}.
\begin{algorithm}[t]
\caption{Meta algorithm}
\label{alg:master}
\begin{algorithmic} [1]
\STATE \textbf{Input:} Grid of learning rates $\eta_1,\eta_2,\dots,$ prior weights $\pi_1^c$, $\pi_1^{\eta_1,s}, \pi^{\eta_2,s}_{1},\dots$ and $\pi_1^{\eta_1,\ell}, \pi^{\eta_2,\ell}_{1},\dots$
\FOR{$t=1,\dots,T$}
\STATE Get predictions $\mathbf{x}_t^c$ from Algorithm \ref{alg:slave:c}, and $\mathbf{x}^{\eta,\ell}_t$, $\mathbf{x}^{\eta,s}_t$ from Algorithms \ref{alg:slave} and \ref{alg:slave:sc} for all  $\eta$
\STATE Play $\mathbf{x}_t=\frac{\pi_t^c\eta^c\mathbf{x}_t^c+\sum_{\eta}(\pi_t^{\eta,s}\eta\mathbf{x}_t^{\eta,s}+\pi_t^{\eta,\ell}\eta\mathbf{x}_t^{\eta,\ell})}{\pi_t^c\eta^c+\sum_{\eta}(\pi_t^{\eta,s}\eta+\pi_t^{\eta,\ell}\eta)}$
\STATE Observe gradient $\mathbf{g}_t$ and send it to all experts
\STATE Update weights:\\
$\pi_{t+1}^c=\frac{\pi_t^{c}e^{-c_t\left(\mathbf{x}_t^{c}\right)}}{\Phi_t}$\\
$\pi_{t+1}^{\eta,s}=\frac{\pi_t^{\eta,s}e^{-s_t^{\eta}\left(\mathbf{x}_t^{\eta,s}\right)}}{\Phi_t}$ for all  $\eta$\\
$\pi_{t+1}^{\eta,\ell}=\frac{\pi_t^{\eta,\ell}e^{-\ell_t^{\eta}\left(\mathbf{x}_t^{\eta,\ell}\right)}}{\Phi_t} $ for all $\eta$\\
where
\begin{equation*}
\begin{split}
\Phi_t=&\sum_{\eta}\left(\pi_t^{\eta,s}e^{-s_t^{\eta}\left(\mathbf{x}_t^{\eta,s}\right)}+\pi_t^{\eta,\ell}e^{-\ell_t^{\eta}\left(\mathbf{x}_t^{\eta,\ell}\right)}\right)\\
&+\pi_t^ce^{-c_t\left(\mathbf{x}_t^c\right)}
\end{split}
\end{equation*}
\ENDFOR
\end{algorithmic}
\end{algorithm}
\begin{algorithm}[t]
\caption{Convex expert algorithm}
\label{alg:slave:c}
\begin{algorithmic} [1]
\STATE $\mathbf{x}^{c}_1=\mathbf{0}$
\FOR{$t=1,\dots,T$}
\STATE Send $\mathbf{x}_t^{c}$ to Algorithm \ref{alg:master}
\STATE Receive gradient $\mathbf{g}_t$ from Algorithm \ref{alg:master}
\STATE Update $\mathbf{x}^{c}_{t+1}=\Pi^{I_d}_{\mathcal{D}}\left(\mathbf{x}_t^{c}-\frac{D}{\eta^cG\sqrt{t}}{\nabla c_t(\x_t^c)}\right),$ where $\nabla c_t(\x_t^c)=\eta^c\mathbf{g}_t$
\ENDFOR
\end{algorithmic}
\end{algorithm}
\begin{algorithm}[h]
\caption{Exp-concave expert algorithm}
\label{alg:slave}
\begin{algorithmic} [1]
\STATE \textbf{Input:} Learning rate $\eta$
\STATE $\mathbf{x}^{\eta,\ell}_1=\mathbf{0},$ $\beta=\frac{1}{2}\min\left\{\frac{1}{4G^{\ell}D},1\right\}$, where $G^{\ell}=\frac{7}{25D}$, $\Sigma_1=\frac{1}{\beta^2D^2} I_d$
\FOR{$t=1,\dots,T$}
\STATE Send $\mathbf{x}_t^{\eta,\ell}$ to  Algorithm \ref{alg:master}
\STATE Receive gradient $\mathbf{g}_t$ from Algorithm \ref{alg:master}
\STATE Update 
\begin{equation*}
\begin{split}
\Sigma_{t+1}=&\Sigma_t+\nabla \ell^{\eta}_t\left(\x_t^{\eta,\ell}\right)\left(\nabla \ell^{\eta}_t\left(\x_t^{\eta,\ell}\right)\right)^{\top}\\
\mathbf{x}^{\eta,\ell}_{t+1}=&\Pi_{\mathcal{D}}^{\Sigma_{t+1}}\left(\x^{\eta,\ell}_t-\frac{1}{\beta}\Sigma_{t+1}^{-1}\nabla \ell^{\eta}_t\left(\x_t^{\eta,\ell}\right)\right)
\end{split}
\end{equation*}
where $\nabla \ell^{\eta}_t(\x_t^{\eta,\ell})=\eta \mathbf{g}_t+2\eta^2\mathbf{g}_t\mathbf{g}_t^{\top}(\x_t^{\eta,\ell}-\x_t)$
\ENDFOR
\end{algorithmic}
\end{algorithm}
\begin{algorithm}[t]
\caption{Strongly convex expert algorithm}
\label{alg:slave:sc}
\begin{algorithmic} [1]
\STATE \textbf{Input:} Learning rate $\eta$
\STATE $\mathbf{x}^{\eta,s}_1=\mathbf{0}$
\FOR{$t=1,\dots,T$}
\STATE Send $\mathbf{x}_t^{\eta,s}$ to  Algorithm \ref{alg:master}
\STATE Receive gradient $\mathbf{g}_t$ from Algorithm \ref{alg:master}
\STATE Update $$\mathbf{x}^{\eta,s}_{t+1}=\Pi^{I_d}_{\mathcal{D}}\left(\mathbf{x}_t^{\eta,s}-\frac{1}{2\eta^2 G^2t}\nabla s_t^{\eta}\left(\x_t^{\eta,s}\right)\right)$$ where $\nabla s_t^{\eta}(\x_t^{\eta,s})=\eta \mathbf{g}_t+2\eta^2G^2\left(\mathbf{x}_t^{\eta,s}-\mathbf{x}_t\right)$
\ENDFOR
\end{algorithmic}
\end{algorithm}
\subsection{The algorithm}
Our method, named {\bf M}ultiple sub-{\bf a}lgorithms  and {\bf le}arning {\bf r}ates (Maler), is a two-level hierarchical structure: at the lower level, a set of experts run in parallel, each of which is configured with a different learning algorithm (Algorithm \ref{alg:slave:c}, \ref{alg:slave}, or \ref{alg:slave:sc}) and learning rate. At the higher level, a meta algorithm (Algorithm \ref{alg:master}) is employed to track the best expert based on empirical performances of the experts.

{\bf Meta Algorithm.}  Tracking the best expert is a well-studied problem, and our meta algorithm is built upon the titled exponentially weighted average \citep{van2016metagrad}. The inputs of the meta algorithm are learning rates and  prior weights of the experts. In each round $t$, the meta algorithm firstly receives actions from all experts (Step 3), and then combines these actions by using exponentially weighted average (Step 4). The weights of the experts are titled by their own $\eta$, so that those experts with larger learning rates will be assigned with larger weights. After observing the gradient at $\x_t$ (Step 5), the meta algorithm updates the weight of each expert via an exponential weighting scheme (Step 6).

{\bf Experts.}   Experts are themselves \emph{non-adaptive} algorithms, such as OGD and ONS. In each round $t$, each expert sends its action to the meta algorithm, then receives a gradient vector from the meta algorithm, and finally updates the action based on the received vector. To optimally handle general convex, exp-concave, and strongly convex functions simultaneously, we design three types of experts as follows:
\begin{itemize}

\setlength{\parsep}{0pt}
\setlength{\parskip}{0pt}
\item Convex expert. As discussed in Section \ref{Moti}, there is no need to search for the optimal learning rate in convex cases and thus we only run one convex OGD (Algorithm \ref{alg:slave:c}) on the convex surrogate loss function $c_t(\mathbf{x})$ in (\ref{defn:surrogatec}).  We denote its action in round $t$ as $\x_t^{c}$. Its prior weight $\pi_1^{c}$ and learning rate $\eta^c$ are set to be ${1}/{3}$ and $1/(2GD\sqrt{T})$, respectively.
\item Exp-concave experts. We keep  $\left\lceil \frac{1}{2}\log T\right\rceil+1$ exp-concave experts, each of which is a standard ONS (Algorithm \ref{alg:slave}) running on an exp-concave surrogate loss function $\ell_t^{\eta}(\cdot)$ in \eqref{elll} with a different $\eta$. We denote its output in round $t$ as $\x_{t}^{\eta,\ell}$. For expert $i=0,1,2,...,\left\lceil \frac{1}{2}\log T\right\rceil,$ its learning rate and prior weight are assigned as follows:
\begin{equation*}
\eta_i=\frac{2^{-i}}{5DG},\ {\rm and}\ \pi^{\eta_i,\ell}_{1}=\frac{C}{3(i+1)(i+2)},
\end{equation*}
where $C=1+1/\left(1+\left\lceil\frac{1}{2}\log T\right\rceil\right)$ is a normalization parameter.
\item  Strongly convex experts. We maintain $\left\lceil \frac{1}{2}\log T\right\rceil+1$ strongly convex experts. In each round $t$, every expert takes an strongly convex surrogate loss $s_t^{\eta}(\cdot)$ in \eqref{defn:surrogates} (with different $\eta$) as the loss function, and adopts strongly convex OGD (Algorithm \ref{alg:slave:sc}) to update its action, denoted as $\x_t^{\eta,s}$. For $ i=0,1,2,...,\left\lceil \frac{1}{2}\log T\right\rceil$, we configure the $i$-th strongly expert as follows:
\begin{equation*}
\eta_i=\frac{2^{-i}}{5DG},\ {\rm and}\ \pi^{\eta_i,s}_{1}=\frac{C}{3(i+1)(i+2)}.
\end{equation*}
\end{itemize}
\paragraph{Computational Complexity.} The computational complexity of Maler is dominated by its experts. If we ignore the projection procedure, the run time of Algorithms  \ref{alg:slave:c}, \ref{alg:slave} and \ref{alg:slave:sc} are $O(d)$, $O(d^2)$ and $O(d)$ per iteration respectively. Combining with the number of experts, the total run time of Maler is $O(d^2\log T)$, which is of the same order of that in MetaGrad. When taking the projection into account, we note that it can be computed efficiently for many convex bodies  used  in practical applications such as $d$-dimensional balls, cubes and simplexes \citep{hazan2007logarithmic}. To put it more concrete, when the convex body is a $d$-dimensional ball, projections in Algorithms  \ref{alg:slave:c}, \ref{alg:slave}, and \ref{alg:slave:sc} require $O(d)$, $O(d^3)$, and $O(d)$ time respectively \citep{van2016metagrad}, and consequently the total computational complexity of Maler is $O(d^3 \log T)$, which is also the same as that of MetaGrad.
\subsection{Theoretical Guarantees }
\begin{thm}
\label{th1}
Suppose assumptions \ref{asu1} and \ref{asu2} hold. Let $V_T^s=G^2\sum_{t=1}^T\|\x_t-\x_*\|^2$, and  $V_T^{\ell}=\sum_{t=1}^T((\x_t-\x_*)^{\top}\mathbf{g}_t)^2$. Then the regret of Maler is simultaneously bounded by
\begin{equation}
\label{t1c}
R(T)\leq 2(1+\ln 3)GD\sqrt{T}=O\left(\sqrt{T}\right),
\end{equation}

\begin{equation}
\begin{split}
\label{t1ec}
&R(T)\\
\leq& 3\sqrt{V_T^{\ell}\left(2\ln\left(\sqrt{3}\left(\frac{1}{2}\log_2 T+3\right)\right)+10d\log T\right)}\\ &+10GD\left(2\ln\left(\sqrt{3}\left(\frac{1}{2}\log_2 T+3\right)\right)+10d\log T\right)\\
=&O\left(\sqrt{V_T^{\ell}d\log T}\right),
\end{split}
\end{equation}
and
\begin{equation}
\begin{split}
\label{t1sc}
&R(T)\\
\leq& 3\sqrt{V_T^s\left(2\ln\left(\sqrt{3}\left(\frac{1}{2}\log_2 T+3\right)\right)+1+\log T\right)}\\ &+10GD\left(2\ln\left(\sqrt{3}\left(\frac{1}{2}\log_2 T+3\right)\right)+1+\log T\right)\\
=&O\left(\sqrt{V_T^s\log T}\right).
\end{split}
\end{equation}
\end{thm}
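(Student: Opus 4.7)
The analysis decomposes into three steps: a meta-level exponential-weights inequality showing that the combined iterate $\x_t$ essentially inherits the surrogate loss of every expert at that expert's own iterate, a standard per-expert surrogate regret bound, and a conversion from surrogate regret back to the true regret $R(T)$. Each of the three displayed bounds is obtained by instantiating this recipe with a single distinguished expert: the convex expert for (\ref{t1c}), and whichever strongly-convex or exp-concave expert has grid rate $\eta_i$ closest to the (data-dependent) optimum for (\ref{t1sc}) and (\ref{t1ec}) respectively.

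For the meta step, the key claim is that $\Phi_t \le 1$ for every $t$. Using the three inequalities stated at the end of Section \ref{Moti}, which apply since all grid rates satisfy $\eta_i \le 1/(5DG) < 2/(3DG)$ and $\eta^c = 1/(2DG\sqrt{T})$ lies in the same regime, I would upper-bound each of $e^{-s_t^\eta(\x_t^{\eta,s})}$, $e^{-\ell_t^\eta(\x_t^{\eta,\ell})}$, and $e^{-c_t(\x_t^c)}$ by the affine expression $1 + \eta^{\mathrm{exp}}(\x_t-\x_t^{\mathrm{exp}})^\top \mathbf{g}_t$. Summing against the weights and pulling $\mathbf{g}_t$ out yields
\[
\Phi_t \le 1 + \mathbf{g}_t^\top\Bigl(\bigl(\textstyle\sum_{\mathrm{exp}} \pi_t^{\mathrm{exp}}\eta^{\mathrm{exp}}\bigr)\x_t - \textstyle\sum_{\mathrm{exp}} \pi_t^{\mathrm{exp}}\eta^{\mathrm{exp}}\x_t^{\mathrm{exp}}\Bigr),
\]
and the tilted averaging rule in Step 4 of Algorithm \ref{alg:master} makes the bracketed vector identically $\mathbf{0}$, giving $\Phi_t \le 1$. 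Unrolling the weight update $\pi_{T+1}^{\mathrm{exp}} = \pi_1^{\mathrm{exp}}\prod_t e^{-L_t^{\mathrm{exp}}(\x_t^{\mathrm{exp}})}/\Phi_t$ and taking logs then gives the meta inequality $-\sum_{t=1}^T L_t^{\mathrm{exp}}(\x_t^{\mathrm{exp}}) \le \ln(1/\pi_1^{\mathrm{exp}})$ for every expert.

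At the expert step, I would invoke textbook guarantees: convex OGD on $c_t$ yields $\sum_t c_t(\x_t^c)-\sum_t c_t(\x_*) = O(\eta^c GD\sqrt{T})$; ONS on the exp-concave surrogate $\ell_t^\eta$ yields $O(d\log T)$; and strongly-convex OGD on the $2\eta^2G^2$-strongly-convex surrogate $s_t^\eta$ yields $O(\log T)$, the step size $1/(2\eta^2G^2 t)$ in Algorithm \ref{alg:slave:sc} being exactly the reciprocal of $\lambda t$. Adding these bounds to the meta inequality upper-bounds $-\sum_t L_t^{\mathrm{exp}}(\x_*)$, which I convert to $R(T)$ via the identity $-\sum_t s_t^\eta(\x_*) = \eta\sum_t(\x_t-\x_*)^\top\mathbf{g}_t - \eta^2 V_T^s$ together with the convexity estimate $R(T)\le\sum_t(\x_t-\x_*)^\top\mathbf{g}_t$, and the analogous identities for $\ell_t^\eta$ (producing $V_T^\ell$) and $c_t$ (producing a $T(\eta^c GD)^2$ term). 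Substituting $\eta^c = 1/(2DG\sqrt{T})$ immediately produces (\ref{t1c}).

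The main obstacle is then to optimize $\eta$ over the grid for the remaining two bounds. The ideal rates $\sqrt{\log T/V_T^s}$ and $\sqrt{d\log T/V_T^\ell}$ are unknown a priori, but the geometric grid $\eta_i = 2^{-i}/(5DG)$ for $i=0,\dots,\lceil\tfrac12\log T\rceil$ covers $[1/(5DG\sqrt{T}),\,1/(5DG)]$ up to a factor of $2$. When the ideal rate lies inside this interval, choosing the nearest $\eta_{i^\star}$ costs only a constant factor and contributes $\ln(1/\pi_1^{\eta_{i^\star}}) = O(\log(i^\star+2)) = O(\log\log T)$, which is precisely the additive $2\ln(\sqrt{3}(\tfrac12\log_2 T+3))$ term in the stated bounds. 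When the ideal rate lies outside, either the endpoint $\eta_0$ already suffices because the $V_T$ term is negligible, or the convex bound (\ref{t1c}) dominates; in either case the stated constants are met after routine algebra.
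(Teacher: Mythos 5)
Your proposal is correct and follows essentially the same route as the paper: the affine upper bounds on $e^{-s_t^{\eta}}$, $e^{-\ell_t^{\eta}}$, $e^{-c_t}$ combined with the tilted averaging rule to get the meta inequality $-\sum_t L_t(\x_t^{\mathrm{exp}})\leq\ln(1/\pi_1^{\mathrm{exp}})$ (the paper phrases this as a telescoping unnormalized potential $\Phi_T\leq\Phi_0=1$, which is equivalent to your per-round normalizer argument), then the standard expert regret bounds, the surrogate-to-regret identities, and a two-case grid optimization. The only point to tighten is your "ideal rate lies outside the grid" discussion: since $V_T^s,V_T^{\ell}\leq G^2D^2T$ and the log factors are at least $1$, the ideal rate never falls below $1/(5GD\sqrt{T})$, so only the upper-endpoint case $\hat{\eta}>1/(5GD)$ actually occurs and is handled by your first alternative.
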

\paragraph{Remark.} Theorem \ref{th1} implies that, similar to MetaGrad, Maler can be upper bounded by $O(\sqrt{V_{T}^{\ell}d\log T})$. Hence, the conclusions of MetaGrad under some fast rates examples such as Bernstein condition still hold for Maler. Moreover, Theorem \ref{th1} shows that Maler also enjoys a new type of data-dependant bound $O(\sqrt{V_{T}^{s}\log T})$, and thus may perform better than MetaGrad in some high dimensional cases such that $V_T^{s}\ll dV_T^{\ell}$.

Next, based on Theorem 1, we derive the following regret bounds for strongly convex and exp-concave loss functions, respectively.
\begin{cor}
\label{cor}
For $\lambda$-strongly convex functions, Theorem \ref{th1}  implies
\begin{equation*}
\begin{split}
R(T)\leq & \left(10GD+\frac{9G^2}{2\lambda}\right)\bigg(2\ln\left(\sqrt{3}\left(\frac{1}{2}\log_2 T+3\right)\right)\\
&+1+\log T\bigg)=O\left(\frac{1}{\lambda}\log T\right).
\end{split}
\end{equation*}
For $\alpha$-exp-concave functions, let $\beta=\frac{1}{2}\min\left\{{\alpha},\frac{1}{4GD}\right\}$, and Theorem \ref{th1}  implies
\begin{equation*}
\begin{split}
R(T)\leq& \left(10GD+\frac{9}{2\beta}\right)\bigg(2\ln\left(\sqrt{3}\left(\frac{1}{2}\log_2 T+3\right)\right)\\
&+10d\log T\bigg)=O\left(\frac{1}{\alpha}d\log T\right).
\end{split}
\end{equation*}
\end{cor}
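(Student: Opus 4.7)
The plan is to combine the data-dependent bounds (\ref{t1ec}) and (\ref{t1sc}) of Theorem~\ref{th1} with the respective curvature inequalities (strong convexity in one case, exp-concavity in the other), yielding a scalar inequality in $V_T^s$ or $V_T^{\ell}$ that can be solved by a single concave-quadratic maximization.

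The first step is to recognize that the proof of Theorem~\ref{th1} actually controls the linearized regret $\tilde R(T) := \sum_{t=1}^T \mathbf{g}_t^{\top}(\mathbf{x}_t - \mathbf{x}_*)$, not merely $R(T)$. This is standard here because Maler consumes only gradient information and the surrogate losses $c_t,s_t^{\eta},\ell_t^{\eta}$ in (\ref{defn:surrogatec}), (\ref{defn:surrogates}), (\ref{elll}) are built out of the $\mathbf{g}_t$'s, so the guarantees produced by the meta algorithm and every expert apply against any $\mathbf{x}\in\mathcal{D}$ via the linearized gap. Since $R(T)\leq \tilde R(T)$ by convexity, the same right-hand sides in (\ref{t1sc}) and (\ref{t1ec}) upper bound $\tilde R(T)$.

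For the $\lambda$-strongly convex case, applying Definition~\ref{defn:stconvex} with $\mathbf{x}_1=\mathbf{x}_*$ and $\mathbf{x}_2=\mathbf{x}_t$ and summing yields
\begin{equation*}
R(T)\;\leq\;\tilde R(T)-\frac{\lambda}{2}\sum_{t=1}^T\|\mathbf{x}_t-\mathbf{x}_*\|^2\;=\;\tilde R(T)-\frac{\lambda}{2G^2}V_T^s.
\end{equation*}
Plugging in the $\tilde R(T)$-version of (\ref{t1sc}) and writing $A:=2\ln(\sqrt{3}(\tfrac{1}{2}\log_2 T+3))+1+\log T$ gives
\begin{equation*}
R(T)\;\leq\;3\sqrt{V_T^s A}+10GD\cdot A-\frac{\lambda}{2G^2}V_T^s,
\end{equation*}
which is concave in $V_T^s$ and maximized at $V_T^s=9G^4A/\lambda^2$, producing the advertised bound $R(T)\leq(10GD+\tfrac{9G^2}{2\lambda})A$. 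For the $\alpha$-exp-concave case, the standard exp-concavity lemma (Hazan et al., 2007) with $\beta=\tfrac{1}{2}\min\{\alpha,\tfrac{1}{4GD}\}$ gives $f_t(\mathbf{x}_*)\geq f_t(\mathbf{x}_t)+\mathbf{g}_t^{\top}(\mathbf{x}_*-\mathbf{x}_t)+\tfrac{\beta}{2}(\mathbf{g}_t^{\top}(\mathbf{x}_*-\mathbf{x}_t))^2$; summing yields $R(T)\leq\tilde R(T)-\tfrac{\beta}{2}V_T^{\ell}$. Substituting the $\tilde R(T)$-version of (\ref{t1ec}) and maximizing the resulting concave quadratic in $V_T^{\ell}\geq 0$ gives $R(T)\leq(10GD+\tfrac{9}{2\beta})B$ with $B:=2\ln(\sqrt{3}(\tfrac{1}{2}\log_2 T+3))+10d\log T$.

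The main obstacle is the first step: extracting, from the proof of Theorem~\ref{th1}, that the data-dependent bound in fact holds for $\tilde R(T)$. Without this upgrade, using only the stated $R(T)$-bound together with the converse estimate $V_T^s\leq\tfrac{2G^2}{\lambda}R(T)$ (which follows from strong convexity and first-order optimality of $\mathbf{x}_*$) leads to a quadratic inequality in $\sqrt{R(T)}$ whose solution degrades the constants in the final statement. Once the $\tilde R(T)$-formulation is secured, the rest is a one-variable maximization with no further work.
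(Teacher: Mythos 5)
Your proposal is correct and follows essentially the same route as the paper: the paper's proof likewise observes that the bounds of Theorem \ref{th1} apply to the linearized regret $\sum_{t=1}^T\mathbf{g}_t^{\top}(\mathbf{x}_t-\mathbf{x}_*)$, subtracts the curvature term $\frac{\lambda}{2G^2}V_T^s$ (resp.\ $\frac{\beta}{2}V_T^{\ell}$), and absorbs $3\sqrt{V_T A}$ via $\sqrt{xy}\leq\frac{\gamma}{2}x+\frac{y}{2\gamma}$ with the optimal $\gamma$, which yields exactly the same constants as your direct concave maximization. Your explicit flagging of the $\tilde R(T)$-upgrade is a point the paper leaves implicit, but it is the same argument.
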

\paragraph{Remark.} Theorem \ref{th1} and Corollary \ref{cor} indicate that our proposed algorithm achieves the worst-case optimal $O(\sqrt{T})$, $O(d\log T)$ and $O(\log T)$ regret bounds for convex, exponentially concave and strongly convex functions respectively. In contrast, the regret bounds of MetaGrad for the three types of loss functions are $O(\sqrt{T\log\log T})$, $O(d\log T)$ and $O(d\log T)$, which are suboptimal for convex and strongly convex functions.
\section{Regret Analysis}
The regret of Maler can be generally decomposed into two components, i.e., the regret of  the meta algorithm (meta regret) and the regrets of expert algorithms (expert regret). We firstly upper bound the two parts separately, and then analyse their composition to prove Theorem 1.
\subsection{Meta Regret}
\label{41}
We define meta regret as the  difference between the cumulative surrogate losses of the actions of the meta algorithm (i.e., $\x_t$s) and that of the actions from a specific expert, which measures the learning ability of the Meta algorithm. For meta regret, we introduce the following lemma.
\begin{lem}
\label{lemma:master}
For every grid point $\eta$, we have
\begin{equation}
\label{lemma1e1}
\sum_{t=1}^Ts_t^{\eta}(\mathbf{x}_t)-\sum_{t=1}^Ts_t^{\eta}(\mathbf{x}^{\eta,s}_t)\leq2\ln\left(\sqrt{3}\left(\frac{1}{2}\log_2 T+3\right)\right)
\end{equation}
\begin{equation}
\label{lemma1e2}
\sum_{t=1}^T\ell_t^{\eta}(\mathbf{x}_t)-\sum_{t=1}^T\ell_t^{\eta}(\mathbf{x}^{\eta,\ell}_t)\leq2\ln\left(\sqrt{3}\left(\frac{1}{2}\log_2 T+3\right)\right)
\end{equation}
and
\begin{equation}
\label{lemma1e3}
\sum_{t=1}^Tc_t(\mathbf{x}_t)-\sum_{t=1}^Tc_t(\mathbf{x}^{c}_t)\leq\ln3.
\end{equation}
\end{lem}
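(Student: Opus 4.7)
The plan is to follow the standard potential-function analysis for tilted exponentially weighted averaging, specialised to the three expert families used here. I would define $\Phi_t$ as in Step~6 of Algorithm~\ref{alg:master} and first show that $\Phi_t\leq 1$ for every $t$. For this, substitute the three exponential inequalities collected at the end of Section~\ref{Moti}, namely
\begin{equation*}
e^{-s_t^{\eta}(\mathbf{x}_t^{\eta,s})}\leq 1+\eta(\mathbf{x}_t-\mathbf{x}_t^{\eta,s})^\top\mathbf{g}_t,\quad e^{-\ell_t^{\eta}(\mathbf{x}_t^{\eta,\ell})}\leq 1+\eta(\mathbf{x}_t-\mathbf{x}_t^{\eta,\ell})^\top\mathbf{g}_t,
\end{equation*}
and $e^{-c_t(\mathbf{x}_t^c)}\leq 1+\eta^c(\mathbf{x}_t-\mathbf{x}_t^c)^\top\mathbf{g}_t$, into the definition of $\Phi_t$, multiply each by its prior weight, and sum. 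Because $\{\pi_t^c,\pi_t^{\eta,s},\pi_t^{\eta,\ell}\}$ remains a probability distribution at every round (a short induction, using that $\Phi_t$ is precisely the normalizer in Step~6 and that the initial weights sum to $1$), the constant-$1$ contributions collapse to $1$, while the linear-in-$\mathbf{g}_t$ residual equals $\bigl[\pi_t^c\eta^c\mathbf{x}_t^c+\sum_{\eta}(\pi_t^{\eta,s}\eta\mathbf{x}_t^{\eta,s}+\pi_t^{\eta,\ell}\eta\mathbf{x}_t^{\eta,\ell})-W_t\mathbf{x}_t\bigr]^\top\mathbf{g}_t$ (with $W_t$ the denominator in the prediction formula), and this vanishes identically by the tilted-average definition of $\mathbf{x}_t$ in Step~4.

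Next I would iterate the weight recursion to obtain, for every expert $e$ with prior $\pi_1^e$ and cumulative surrogate loss $L_T^e$,
\begin{equation*}
\pi_{T+1}^e=\frac{\pi_1^e\,e^{-L_T^e}}{\prod_{t=1}^{T}\Phi_t}.
\end{equation*}
Since $\pi_{T+1}^e\in[0,1]$ and $\prod_t\Phi_t\leq 1$ from Step~1, rearranging yields $L_T^e\geq\ln\pi_1^e$. The meta's own loss is explicit: plugging $\mathbf{x}=\mathbf{x}_t$ into \eqref{defn:surrogates}, \eqref{elll}, and \eqref{defn:surrogatec} gives $s_t^\eta(\mathbf{x}_t)=\ell_t^\eta(\mathbf{x}_t)=0$ and $c_t(\mathbf{x}_t)=(\eta^c GD)^2$. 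Therefore the meta regret against the strongly-convex expert at rate $\eta_i$ reduces to $-L_T^{\eta_i,s}\leq -\ln\pi_1^{\eta_i,s}$, and analogously for the exp-concave case.

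Substituting $\pi_1^{\eta_i,s}=C/(3(i+1)(i+2))$ with normalizer $C=(N+2)/(N+1)$ and $N=\lceil\tfrac{1}{2}\log_2 T\rceil$, the supremum over $i\in\{0,\dots,N\}$ is attained at $i=N$; the identity $3(N+1)(N+2)/C=3(N+1)^2$ together with $N+1\leq\tfrac{1}{2}\log_2 T+3$ then yields $-\ln\pi_1^{\eta_i,s}\leq 2\ln(\sqrt{3}(\tfrac{1}{2}\log_2 T+3))$, giving \eqref{lemma1e1} and \eqref{lemma1e2}. For the convex case, $\pi_1^c=1/3$ directly gives $-L_T^c\leq\ln 3$, and the contribution $\sum_t c_t(\mathbf{x}_t)=T(\eta^c GD)^2$ is the fixed constant $1/4$ under $\eta^c=1/(2GD\sqrt{T})$, from which \eqref{lemma1e3} follows after matching constants. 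The main obstacle is Step~1: proving $\Phi_t\leq 1$ hinges on the three exponential inequalities, each resting on the elementary estimate $e^{u-v}\leq 1+u$ valid for $|u|\leq\sqrt{v}$, which must be justified on the prescribed grid $\eta\in[0,2/(3DG)]$ and for $\eta^c=1/(2DG\sqrt{T})$ using the Lipschitz and diameter assumptions to control the feasible range of $u$. Once $\Phi_t\leq 1$ is in hand, the remainder of the proof is algebraic bookkeeping of the prior weights.
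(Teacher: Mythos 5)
Your proposal is correct and follows essentially the same route as the paper: the paper works with the cumulative potential $\Phi_T=\sum_{e}\pi_1^{e}e^{-L_T^{e}}$ and shows it is non-increasing (hence $\leq 1$), which is exactly equivalent to your per-round claim that each normalizer in Step 6 is at most $1$, and both arguments rest on the same three exponential inequalities together with the vanishing of the linear residual under the tilted average, followed by the same bookkeeping of the prior weights. Your observation that $c_t(\x_t)=(\eta^cGD)^2$ contributes an extra additive $T(\eta^cGD)^2=1/4$ to \eqref{lemma1e3} is a genuine (harmless) constant that the paper's own proof silently drops, so on that point your accounting is actually the more careful of the two.
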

\begin{proof}
We firstly introduce three inequalities. For every grid point $\eta$,
\begin{equation}
\begin{split}
\label{ineq:meta_1}
e^{-s_t^{\eta}(\mathbf{x}_t^{\eta,s})}\overset{\eqref{defn:surrogates}}{=}&e^{\eta(\mathbf{x}_t-\mathbf{x}_t^{\eta,s})^{\top}\mathbf{g}_t-\eta^2G^2\|\mathbf{x}_t-\mathbf{x}_t^{\eta,s}\|^2}\\
\leq& e^{\eta(\mathbf{x}_t-\mathbf{x}_t^{\eta,s})^{\top}\mathbf{g}_t-\left(\eta (\mathbf{x}_t-\mathbf{x}_t^{\eta,s})^{\top}\mathbf{g}_t\right)^2}\\
\leq& 1+\eta (\mathbf{x}_t-\mathbf{x}_t^{\eta,s})^{\top}\mathbf{g}_t
\end{split}
\end{equation}
where the first inequality follows from Cauchy-Schwarz inequality, and the second inequality is due to $e^{x-x^2}\leq1+x$ for any $x\geq-\frac{2}{3}$ \citep{van2016metagrad}. Applying similar arguments, we have  for every grid point $\eta$,
\begin{equation}
\begin{split}
\label{l1e2}
e^{-\ell_t^{\eta}(\mathbf{x}_t^{\eta,\ell})}\overset{\eqref{elll}}{=}&e^{\eta(\mathbf{x}_t-\mathbf{x}_t^{\eta,\ell})^{\top}\mathbf{g}_t-\left(\eta (\mathbf{x}_t-\mathbf{x}_t^{\eta,\ell})^{\top}\mathbf{g}_t\right)^2}\\
\leq& 1+\eta (\mathbf{x}_t-\mathbf{x}_t^{\eta,\ell})^{\top}\mathbf{g}_t
\end{split}
\end{equation}
and
\begin{equation}
\begin{split}
\label{l1e3}
e^{-c_t(\mathbf{x}_t^{c})}\overset{\eqref{defn:surrogatec}}{=}&e^{\eta^c(\mathbf{x}_t-\mathbf{x}_t^c)^{\top}\mathbf{g}_t-\left(\eta^cGD\right)^2}\\
\leq& e^{\eta^c(\mathbf{x}_t-\mathbf{x}_t^{c})^{\top}\mathbf{g}_t-\left(\eta^c (\mathbf{x}_t-\mathbf{x}_t^{c})^{\top}\mathbf{g}_t\right)^2}\\
\leq& 1+\eta^c (\mathbf{x}_t-\mathbf{x}_t^{c})^{\top}\mathbf{g}_t.
\end{split}
\end{equation}
Note that by definition of $\eta^c$ we have $\eta^c(\x_t-\x_t^c)^{\top}\mathbf{g}_t>-\frac{1}{2}$.\\
Now we are ready to prove Lemma \ref{lemma:master}. Define potential function
\begin{equation}
\begin{split}
\label{phi}
\Phi_T=&\sum_{\eta}\left(\pi_1^{\eta,s}e^{-\sum_{t=1}^Ts_t^{\eta}(\mathbf{x}_t^{\eta,s})}+\pi_1^{\eta,\ell}e^{-\sum_{t=1}^T\ell_t^{\eta}(\mathbf{x}_t^{\eta,\ell})}\right)\\
&+\pi^c_1e^{-\sum_{t=1}^T{c_t(\mathbf{x}_t^{c})}}.
\end{split}
\end{equation}
We have
\begin{equation}
\begin{split}
\label{ineq:meta_22}
&\Phi_{T+1}-\Phi_{T}\\
=&\sum_{\eta}\pi_1^{\eta,s}e^{-\sum_{t=1}^Ts_t^{\eta}(\mathbf{x}_t^{\eta,s})}\left(e^{-s_{T+1}^{\eta}(\mathbf{x}_{T+1}^{\eta,s})}-1\right)\\
&+\sum_{\eta}\pi_1^{\eta,\ell}e^{-\sum_{t=1}^T\ell_t^{\eta}(\mathbf{x}_t^{\eta,\ell})}\left(e^{-\ell_{T+1}^{\eta}(\mathbf{x}_{T+1}^{\eta,\ell})}-1\right)\\
&+\pi_1^ce^{-\sum_{t=1}^Tc_t(\mathbf{x}_t^c)}\left(e^{-c_t(\mathbf{x}^c_{T+1})}-1\right)\\
\leq&\sum_{\eta}\pi_1^{\eta,s}e^{-\sum_{t=1}^Ts_t^{\eta}(\mathbf{x}_t^{\eta,s})}\eta (\mathbf{x}_{T+1}-\mathbf{x}_{T+1}^{\eta,s})^{\top}\mathbf{g}_t\\
&+\sum_{\eta}\pi_1^{\eta,\ell}e^{-\sum_{t=1}^T\ell_t^{\eta}(\mathbf{x}_{t}^{\eta,\ell})}\eta (\mathbf{x}_{T+1}-\mathbf{x}_{T+1}^{\eta,\ell})^{\top}\mathbf{g}_t\\
&+\pi_1^ce^{-\sum_{t=1}^Tc_t(\mathbf{x}_t^c)}\eta^c (\mathbf{x}_{T+1}-\mathbf{x}_{T+1}^{c})^{\top}\mathbf{g}_t\\
=&\left({a}_T\mathbf{x}_{T+1}-\mathbf{b}_T\right)^{\top}\mathbf{g}_t
\end{split}
\end{equation}
where the inequality is due to \eqref{ineq:meta_1}, \eqref{l1e2}, and \eqref{l1e3}, and
\begin{equation*}
\begin{split}
{a}_T=&\sum_{\eta}\pi_1^{\eta,s}e^{-\sum_{t=1}^Ts_t^{\eta}(\mathbf{x}_t^{\eta,s})}\eta+\pi_1^ce^{-\sum_{t=1}^Tc_t(\mathbf{x}_t^c)}\eta^c\\
&+\sum_{\eta}\pi_1^{\eta,\ell}e^{-\sum_{t=1}^T\ell_t^{\eta}(\mathbf{x}_t^{\eta,\ell})}\eta\\
\mathbf{b}_T=&\sum_{\eta}\pi_1^{\eta,\ell}e^{-\sum_{t=1}^T\ell_t^{\eta}(\mathbf{x}_t^{\eta,\ell})}\eta\mathbf{x}_{T+1}^{\eta,\ell}\\
&+\pi_1^ce^{-\sum_{t=1}^Tc_t(\mathbf{x}_t^c)}\eta^c\mathbf{x}_{T+1}^c\\
&+\sum_{\eta}\pi_1^{\eta,s}e^{-\sum_{t=1}^Ts_t^{\eta}(\mathbf{x}_t^{\eta,s})}\eta\mathbf{x}_{T+1}^{\eta,s}
\end{split}
\end{equation*}
On the other hand, by the update rule of $\mathbf{x}_t$, we have
\begin{equation}
\begin{split}
\label{ineq:meta_33}
\mathbf{x}_{T+1}=&\frac{\sum_{\eta}(\pi_{T+1}^{\eta,s}\eta\mathbf{x}_{T+1}^{\eta,s}+\pi_{T+1}^{\eta,\ell}\eta\mathbf{x}_{T+1}^{\eta,\ell})+\pi_{T+1}^c\eta^c\mathbf{x}_{T+1}^c}{\sum_{\eta}(\pi_{T+1}^{\eta,s}\eta+\pi_{T+1}^{\eta,\ell}\eta)+\pi_{T+1}^c\eta^c}\\
=&\frac{\mathbf{b}_T}{a_T}
\end{split}
\end{equation}
where the second equality comes from Step 6 of Algorithm \ref{alg:master}, and note that $\pi_{t+1}^{c}, \pi_{t+1}^{\eta,\ell}$ and $\pi_{t+1}^{\eta,s}$ share the same denominator.
Plugging \eqref{ineq:meta_33} into \eqref{ineq:meta_22}, we get
$$\Phi_{T+1}-\Phi_{T}\leq0$$
which implies that
\begin{equation}
\label{phi2}
1=\Phi_0\geq\Phi_1\geq\dots\geq\Phi_T.
\end{equation}
Note that all terms in the the definition of $\Phi_T$ \eqref{phi} are positive. Combining with \eqref{phi2}, it indicates that these terms are less than 1. Thus,
\begin{equation*}
0\leq -\ln\left(\pi_1^{\eta,s}e^{-\sum_{t=1}^Ts_t^{\eta}(\mathbf{x}_t^{\eta,s})}\right)=\sum_{t=1}^Ts_t^{\eta}(\mathbf{x}_t^{\eta,s})+\ln\frac{1}{\pi_1^{\eta,s}}
\end{equation*}
\begin{equation*}
0\leq -\ln\left(\pi_1^{\eta,\ell}e^{-\sum_{t=1}^T\ell_t^{\eta}(\mathbf{x}_t^{\eta,\ell})}\right)=\sum_{t=1}^T\ell_t^{\eta}(\mathbf{x}_t^{\eta,\ell})+\ln\frac{1}{\pi_1^{\eta,\ell}}
\end{equation*}
and
\begin{equation*}
0\leq -\ln\left(\pi^c_1e^{-\sum_{t=1}^T{c_t(\mathbf{x}_t^{c})}}\right)= \sum_{t=1}^Tc_t(\mathbf{x}_t^{c})+\ln\frac{1}{\pi_1^{c}}.
\end{equation*}
We finish the proof by noticing that for every  grid point $\eta$,
\begin{equation*}
\begin{split}
\ln \frac{1}{\pi_1^{\eta,s}}\leq& \ln\left(3\left(\left\lceil \frac{1}{2}\log T\right\rceil+1\right)\left(\left\lceil \frac{1}{2}\log T\right\rceil+2\right)\right)\\
\leq& 2\ln\left(\sqrt{3}\left(\frac{1}{2}\log_2 T+3\right)\right)
\end{split}
\end{equation*}
\begin{equation*}
\begin{split}
\ln \frac{1}{\pi_1^{\eta,\ell}}\leq& \ln\left(3\left(\left\lceil \frac{1}{2}\log T\right\rceil+1\right)\left(\left\lceil \frac{1}{2}\log T\right\rceil+2\right)\right)\\
\leq& 2\ln\left(\sqrt{3}\left(\frac{1}{2}\log_2 T+3\right)\right)
\end{split}
\end{equation*}
and $\ln \frac{1}{\pi_1^{c}}=\ln 3$.
\end{proof}
\subsection{Expert Regret}
For the regret of each expert, we have the following lemma. The proof is postponed to the appendix.
\begin{lem}
\label{lem2}
For every grid point $\eta$ and any $\u\in\mathcal{D}$, we have \emph{
\begin{equation}
\label{lemma2e1}
\sum_{t=1}^Ts_t^{\eta}(\mathbf{x}_t^{\eta,s})-\sum_{t=1}^Ts_t^{\eta}(\u)\leq 1+\log T
\end{equation}}
\emph{
\begin{equation}
\label{lemma2e2}
\sum_{t=1}^T\ell_t^{\eta}(\mathbf{x}_t^{\eta,\ell})-\sum_{t=1}^T\ell_t^{\eta}(\u)\leq 10d\log T
\end{equation}
and
\begin{equation}
\label{lemma2e3}
\sum_{t=1}^Tc_t(\mathbf{x}_t^c)-\sum_{t=1}^Tc_t(\u)\leq\frac{3}{4}.
\end{equation}}
\end{lem}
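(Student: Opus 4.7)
The plan is to treat each of the three inequalities as an instance of a standard regret bound for its corresponding expert algorithm, with the bulk of the work being to verify the hypotheses (gradient bound, strong convexity, exp-concavity) on each surrogate so that the standard analysis yields exactly the stated numerical constants. The three arguments are independent since the three experts do not interact at the expert level.

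For \eqref{lemma2e3}, I would note that $c_t$ is linear in $\x$ up to an additive constant, with $\nabla c_t(\x)=\eta^c\mathbf{g}_t$ and hence $\|\nabla c_t\|\leq \eta^c G$. Algorithm \ref{alg:slave:c} is exactly Zinkevich's OGD with step size $D/(\eta^c G\sqrt{t})$, so Zinkevich's standard $O(\sqrt{T})$ analysis gives a regret at most $\tfrac{3}{2}\eta^c GD\sqrt{T}$, and plugging in $\eta^c=1/(2GD\sqrt{T})$ recovers the claimed $3/4$.

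For \eqref{lemma2e1}, I would first observe that $s_t^{\eta}$ is $2\eta^2G^2$-strongly convex (its Hessian is $2\eta^2 G^2 I_d$) and compute $\nabla s_t^{\eta}(\x_t^{\eta,s})=\eta\mathbf{g}_t+2\eta^2G^2(\x_t^{\eta,s}-\x_t)$, which has norm at most $\eta G+2\eta^2G^2 D\leq \tfrac{7}{5}\eta G$ since $\eta\leq 1/(5DG)$. Algorithm \ref{alg:slave:sc} is strongly convex OGD with the matching step size $1/(2\eta^2 G^2 t)$, so the Hazan--Agarwal--Kale bound $\tfrac{L^2}{2\lambda}(1+\log T)$ applied with $L=\tfrac{7}{5}\eta G$ and $\lambda=2\eta^2G^2$ gives a regret of at most $\tfrac{49}{100}(1+\log T)\leq 1+\log T$.

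For \eqref{lemma2e2}, I would verify two things about $\ell_t^{\eta}$. First, using $\nabla\ell_t^{\eta}(\x)=\eta\mathbf{g}_t+2\eta^2\mathbf{g}_t\mathbf{g}_t^{\top}(\x-\x_t)$ together with $\eta\leq 1/(5DG)$, the gradient norm is bounded by $\tfrac{7}{5}\eta G\leq \tfrac{7}{25D}=G^{\ell}$, matching the constant used to set $\beta$. Second, for exp-concavity I would apply the standard characterization $\nabla^2\ell_t^{\eta}\succeq \alpha\,\nabla\ell_t^{\eta}(\nabla\ell_t^{\eta})^{\top}$: since $\nabla^2\ell_t^{\eta}=2\eta^2\mathbf{g}_t\mathbf{g}_t^{\top}$ and $\nabla\ell_t^{\eta}(\nabla\ell_t^{\eta})^{\top}=\bigl(\eta+2\eta^2\mathbf{g}_t^{\top}(\x-\x_t)\bigr)^2\mathbf{g}_t\mathbf{g}_t^{\top}$, the pointwise bound $|\eta+2\eta^2\mathbf{g}_t^{\top}(\x-\x_t)|\leq \tfrac{7}{5}\eta$ implies $\ell_t^{\eta}$ is at least $1$-exp-concave. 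With these two ingredients and the stated $\beta=\tfrac{1}{2}\min\{1/(4G^{\ell}D),1\}$, Algorithm \ref{alg:slave} is precisely ONS applied to $\ell_t^{\eta}$, and Hazan's standard ONS analysis yields a regret at most $5\bigl(\tfrac{1}{\alpha}+G^{\ell}D\bigr)d\log T\leq 10\,d\log T$.

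The main obstacle I anticipate is the exp-concavity step: one has to bound $\mathbf{g}_t^{\top}(\x-\x_t)$ uniformly over $\x\in\mathcal{D}$ and carefully track the resulting factor of $7/5$ so that the ratio $\nabla^2\ell_t^{\eta}/\nabla\ell_t^{\eta}(\nabla\ell_t^{\eta})^{\top}$ stays bounded below by an absolute constant independent of $t$, $\x$ and $\eta$. Once this is in hand, the remaining gradient-norm calculations and the three invocations of textbook OGD/ONS regret bounds are routine; the delicate part of the design is really that the learning rates, the constant $G^{\ell}=7/(25D)$, and the grid $\eta\in\{2^{-i}/(5DG)\}$ have been chosen precisely so that all of these numerical factors fall into place simultaneously.
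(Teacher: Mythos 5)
Your proposal is correct and follows essentially the same route as the paper's appendix proof: verify strong convexity with modulus $2\eta^2G^2$ and the gradient bound $\eta G+2\eta^2G^2D\leq\tfrac{7}{5}\eta G$ for $s_t^{\eta}$, verify $1$-exp-concavity of $\ell_t^{\eta}$ via $\nabla^2\ell_t^{\eta}\succeq\nabla\ell_t^{\eta}(\nabla\ell_t^{\eta})^{\top}$ together with the bound $G^{\ell}=\tfrac{7}{25D}$, and then invoke the textbook strongly-convex OGD, ONS, and Zinkevich OGD regret bounds, with all constants matching. No gaps.
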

\subsection{Proof of Theorem 1}
In the following, we combine the regret analysis of the meta and expert algorithms to prove Theorem 1.
\begin{proof}
To get the $\sqrt{T}$ bound of \eqref{t1c},  we upper bound the regret by using the properties of $c_t$ as follows.
\begin{equation*}
\begin{split}
&R(T)\\
\overset{\eqref{defn:regret}}{=}&\sum_{t=1}^Tf_t(\mathbf{x}_t)-\sum_{t=1}^Tf_t(\mathbf{x}_*)\\
\overset{\eqref{defn:convex}}{\leq}& \sum_{t=1}^T\mathbf{g}_t^{\top}(\mathbf{x}_t-\mathbf{x}_*)\\
\overset{\eqref{defn:surrogatec}}{=}&\frac{\sum_{t=1}^T-c_t(\textbf{x}_*)+\sum_{t=1}^T(\eta^cGD)^2}{\eta^c}\\
=&\frac{\sum_{t=1}^T\left(c_t(\mathbf{x}_t)-c_t(\mathbf{x}_t^{c})\right)+\sum_{t=1}^T\left(c_t(\mathbf{x}_t^{c})-c_t(\textbf{x}_*)\right)}{\eta^c}\\
&+\eta^cG^2D^2T\\
\leq&\left(\ln 3+ \frac{3}{4}\right)2GD\sqrt{T}+\frac{1}{2}GD\sqrt{T}\\
=&2(1+\ln 3)GD\sqrt{T}
\end{split}
\end{equation*}
where $\x_*=\min_{\x\in\mathcal{D}}\sum_{t=1}^Tf_t(\x)$, and the last inequality follows from \eqref{lemma1e3} and \eqref{lemma2e3}.

Next, to achieve the regret of \eqref{t1sc}, we upper bound $R(T)$ by making use of the properties of  $s_t^{\ell}$. For every grid point $\eta$, we have
\begin{equation}
\begin{split}
\label{rit}
&R(T)\\
\overset{\eqref{defn:regret}}{=}&\sum_{t=1}^Tf_t(\mathbf{x}_t)-\sum_{t=1}^Tf_t(\mathbf{x}_*)\\
\overset{\eqref{defn:convex}}{\leq}& \sum_{t=1}^T\mathbf{g}_t^{\top}(\mathbf{x}_t-\mathbf{x}_*)\\
\overset{\eqref{defn:surrogates}}{=}&\frac{\sum_{t=1}^T-s^{\eta}_t(\textbf{x}_*)+\eta^2G^2\|\mathbf{x}_*-\mathbf{x}_t\|^2}{\eta}\\
=&\frac{\sum_{t=1}^T\left(s_t^{\eta}(\mathbf{x}_t)-s_t^{\eta}(\mathbf{x}_t^{\eta,s})\right)+\sum_{t=1}^T\left(s_t^{\eta}(\mathbf{x}_t^{\eta,s})-s_t^{\eta}(\textbf{x}_*)\right)}{\eta}\\
&+\sum_{t=1}^T\eta G^2\|\mathbf{x}_t-\mathbf{x}_*\|^2\\
\leq &\frac{2\ln\left(\sqrt{3}\left(\frac{1}{2}\log_2 T+3\right)\right)+1+\log T}{\eta}\\
&+\sum_{t=1}^T\eta G^2\|\mathbf{x}_t-\mathbf{x}_*\|^2\\
=&\eta V_T^s+\frac{2\ln\left(\sqrt{3}\left(\frac{1}{2}\log_2 T+3\right)\right)+1+\log T}{\eta}
\end{split}
\end{equation}
where $V_T^s=\sum_{t=1}^TG^2\|\x_t-\x_*\|^2$, and the inequality comes from \eqref{lemma1e1} and \eqref{lemma2e1}. Define
\begin{equation*}
\begin{split}
A=2\ln\left(\sqrt{3}\left(\frac{1}{2}\log_2 T+3\right)\right)+1+\log T\geq 1.
\end{split}
\end{equation*}
The optimal $\hat{\eta}$ to minimize the right hand side of \eqref{rit} is
\begin{equation}
\begin{split}
\label{opteta}
\hat{\eta}=&\sqrt{\frac{A}{V_T^s}}\geq \frac{1}{5GD\sqrt{T}}.
\end{split}
\end{equation}
If $\hat{\eta}\leq\frac{1}{5GD}$, then by construction their exists a grid point $\eta\in[\frac{\hat{\eta}}{2},\hat{\eta}]$, and thus
\begin{equation*}
\begin{split}
&R(T)\leq {\eta}V_T^s+\frac{A}{{\eta}}\leq \hat{\eta}V_T^s+\frac{2A}{\hat{\eta}}=3\sqrt{V_T^sA}.
\end{split}
\end{equation*}
On the other hand, if $\hat{\eta}>\frac{1}{5GD}$, then by \eqref{opteta} we get
\begin{equation*}
V_T^s\leq 25G^2D^2A.
\end{equation*}
Thus for $\eta_1=\frac{1}{5GD}$, we have
\begin{equation*}
R(T)\leq 10GDA.
\end{equation*}
Overall, we obtain
\begin{equation*}
\begin{split}
R(T)\leq& 3\sqrt{V_T^sA}+10GDA.
\end{split}
\end{equation*}

Finally, we upper bound the regret by using  the properties exp-concave surrogate loss functions. For every grid point  $\eta$, we have
\begin{equation*}
\begin{split}
&R(T)\\
\overset{\eqref{defn:regret}}{=}&\sum_{t=1}^Tf_t(\mathbf{x}_t)-\sum_{t=1}^Tf_t(\mathbf{x}_*)\\
\overset{\eqref{defn:convex}}{\leq}& \sum_{t=1}^T\mathbf{g}_t^{\top}(\mathbf{x}_t-\mathbf{x}_*)\\
\overset{\eqref{elll}}{=}&\frac{\sum_{t=1}^T-\ell^{\eta}_t(\textbf{x}_*)+\eta^2\left(\mathbf{g}_t^{\top}(\mathbf{x}_t-\mathbf{x}_*)\right)^2}{\eta}\\
=&\frac{\sum_{t=1}^T\left(\ell_t^{\eta}(\mathbf{x}_t)-\ell_t^{\eta}(\mathbf{x}_t^{\eta,\ell})\right)}{\eta}+\eta\sum_{t=1}^T\left(\mathbf{g}_t^{\top}(\mathbf{x}-\mathbf{x}_*)\right)^2\\
&+\frac{\sum_{t=1}^T\left(\ell_t^{\eta}(\mathbf{x}_t^{\eta,\ell})-\ell_t^{\eta}(\textbf{x}_*)\right)}{\eta}\\
\leq &\frac{2\ln\left(\sqrt{3}\left(\frac{1}{2}\log_2 T+3\right)\right)+10d\log T}{\eta}\\
&+\eta\sum_{t=1}^T\left(\mathbf{g}_t^{\top}(\mathbf{x}_*-\mathbf{x}_t)\right)^2\\
=&\eta V_T^{\ell}+\frac{2\ln\left(\sqrt{3}\left(\frac{1}{2}\log_2 T+3\right)\right)+10d\log T}{\eta}
\end{split}
\end{equation*}
Where $V_T^{\ell}=\sum_{t=1}^T\left(\left(\x_t-\x_*\right)^{\top}\mathbf{g}_t\right)^2$, and the last inequality comes from \eqref{lemma1e2} and \eqref{lemma2e2}. Define $$B=2\ln\left(\sqrt{3}\left(\frac{1}{2}\log_2 T+3\right)\right)+10d\log T.$$
By similar arguments, we get
\begin{equation*}
\begin{split}
R(T)\leq 3\sqrt{V_T^{\ell}B}+10GDB.
\end{split}
\end{equation*}
\end{proof}
\subsection{Proof of Corollary \ref{cor}}
\label{pcor}
\begin{proof}
For $\alpha$-exp-concave functions, we have
\begin{equation*}
\begin{split}
&R(T)\\
\leq& \sum_{t=1}^T\mathbf{g}_t^{\top}(\mathbf{x}_t-\mathbf{x}_*)-\frac{\beta}{2}V_T^{\ell}\\
\leq &3\sqrt{V_T^{\ell}\left(2\ln\left(\sqrt{3}\left(\frac{1}{2}\log_2 T+3\right)\right)+10d\log T\right)}\\
&+10GD\left(2\ln\left(\sqrt{3}\left(\frac{1}{2}\log_2 T+3\right)\right)+10d\log T\right)\\
&-\frac{\beta}{2}V_T^{\ell}\\
\leq & \frac{3\gamma}{2}V_T^{\ell}+\left(10GD+\frac{3}{2\gamma}\right)\left(2\ln\left(\sqrt{3}\left(\frac{1}{2}\log_2 T+3\right)\right)\right.\\
&\left.+10d\log T\right)-\frac{\beta}{2}V_T^{\ell}
\end{split}
\end{equation*}
where the last inequality is based on $\sqrt{xy}\leq \frac{\gamma}{2}x+\frac{y}{2\gamma}$ for all $x,y,\gamma>0$, The result follows from $\gamma=\frac{\beta}{3}$.\\
For $\lambda$-strongly convex functions, we have
\begin{equation*}
\begin{split}
&R(T)\\
\leq& \sum_{t=1}^T\mathbf{g}_t^{\top}(\mathbf{x}_t-\mathbf{x}_*)-\frac{\lambda}{2}\|\x_t-\x_*\|^2\\
\leq&3\sqrt{V_T^s\left(2\ln\left(\sqrt{3}\left(\frac{1}{2}\log_2 T+3\right)\right)+1+\log T\right)}\\
&+10GD\left(2\ln\left(\sqrt{3}\left(\frac{1}{2}\log_2 T+3\right)\right)+1+\log T\right)\\
&-\frac{\lambda}{2G^2}V^s_T\\
\leq&\frac{3\gamma V_T^s}{2}+\left(10GD+\frac{3}{2\gamma}\right)\left(2\ln\left(\sqrt{3}\left(\frac{1}{2}\log_2 T+3\right)\right)\right.\\
&\left.+1+\log T\right)-\frac{\lambda}{2}V_T^s
\end{split}
\end{equation*}
where the last inequality is due to $\sqrt{xy}\leq \frac{\gamma}{2}x+\frac{y}{2\gamma}$ for all $x,y,\gamma>0$, and the result follows from $\gamma=\frac{\lambda}{3G^2}$.
\end{proof}
\begin{figure*}[t]
        \begin{subfigure}[b]{0.48\textwidth}
        \includegraphics[width=\textwidth]{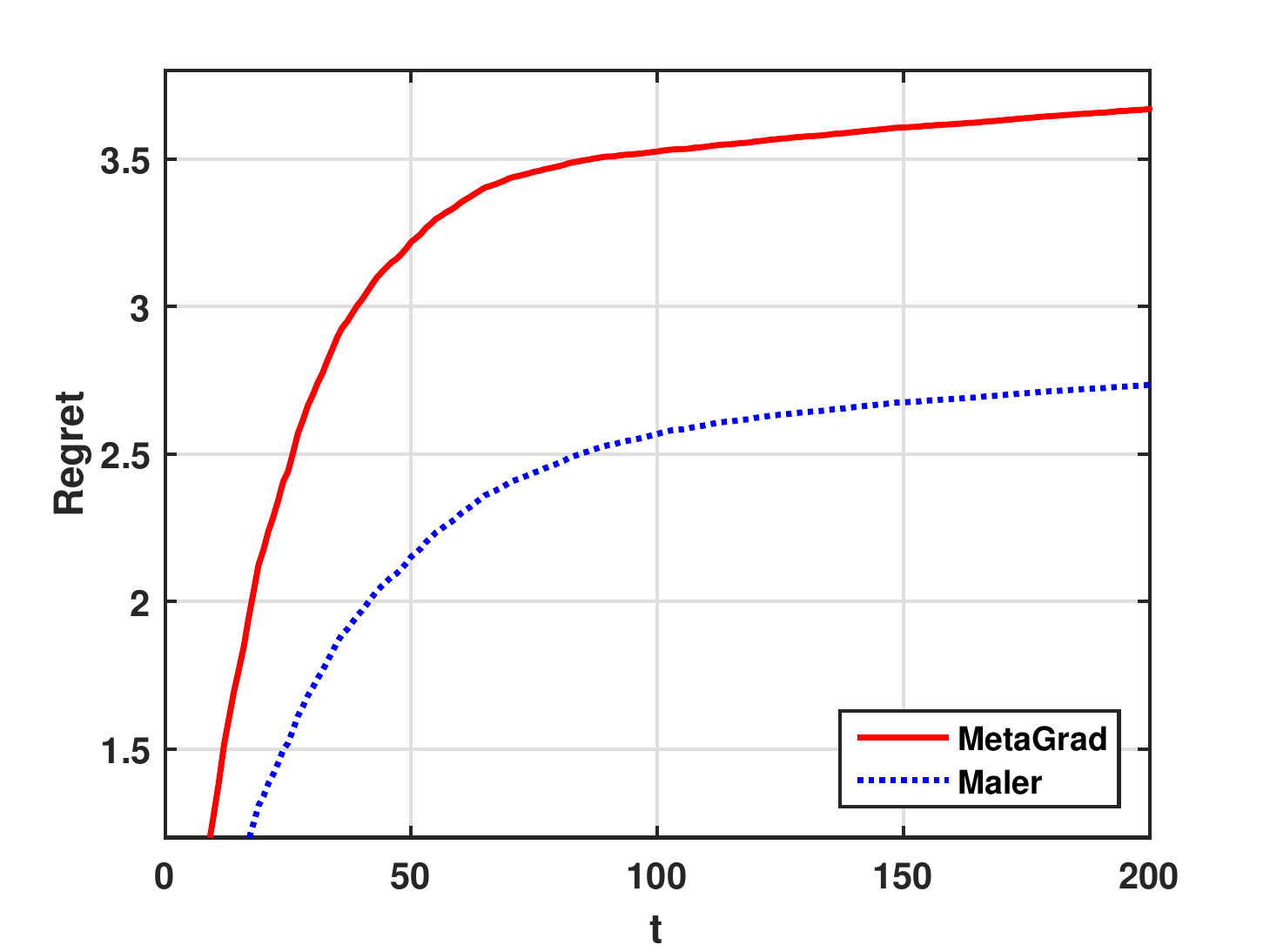}
        \caption{Online regression}
        \label{fig:MNISTCNNA}
    \end{subfigure}
    \hspace{-.2in}
    ~ 
    \begin{subfigure}[b]{0.48\textwidth}
        \includegraphics[width=\textwidth]{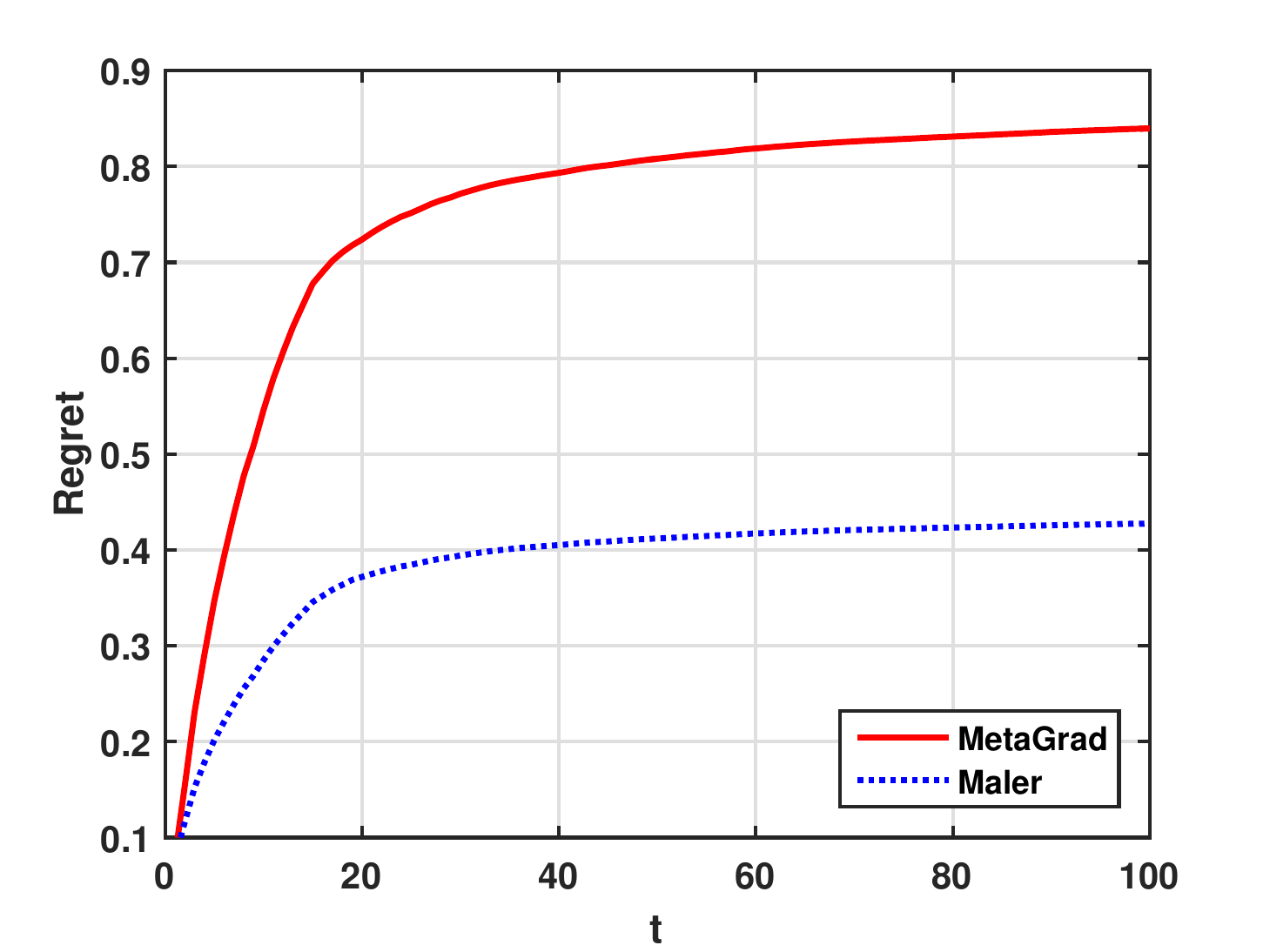}
        \caption{Online classification}
        \label{fig:CIFAR10CNNA}
    \end{subfigure}

    \caption{Emprecial results of Maler and MetaGrad for online regression and classification}
    ~ 
\end{figure*}
\section{Experiments}
In this section, we present empirical results on different online learning tasks to evaluate the proposed algorithm. We choose MetaGrad as the baseline method.
\subsection{Online Regression}
We consider  mini-batch least mean square regression with $\ell_2$ regularizer, which is a classic problem belonging to online strongly convex optimization. In each round $t$, a small batch of training examples $\{(\x_{t,1},y_{t,1}),\dots,(\x_{t,n},y_{t,n})\}$ arrives, and at the same time, the learner makes a prediction of the unknown parameter $\mathbf{w}_*$, denoted as $\mathbf{w}_t$, and suffers a loss, defined as
\begin{equation}
\begin{split}
f_t(\mathbf{w})=\frac{1}{n}\sum_{i=1}^n\left(\mathbf{w}^{\top}\x_{t,i}-y_{t,i}\right)^2+\lambda \|\mathbf{w}\|^2.
\end{split}
\end{equation}
We conduct the experiment on a symmetric data set, which is constructed as follows. We sample $\mathbf{w}_*$ and feature vectors $\x_{t,i}$ uniformly at random from the $d$-ball of diameter 1 and 10 respectively, and generate $y_{t,i}$ according to a linear model: $y_{t,i}=\mathbf{w}_*^{\top}\x_{t,i}+\eta_t$, where the noise is drawn from a normal distribution. We set batch size $n=200$, $\lambda=0.001$, $d=50$, and $T=200$. The regret v.s. time horizon is shown in Figure 1(a). It can be seen that Maler achieves faster convergence rate than MetaGrad.
\subsection{Online Classification}
Next, we consider online classification by using logistic regression. In each round $t$, we receive a batch of training examples $\{(\x_{t,1},y_{t,1}),\dots,(\x_{t,n},y_{t,n})\}$, and choose a linear classifier $\mathbf{w}$. After that, we suffer a logistic loss
\begin{equation}
f_t(\mathbf{w})=\frac{1}{n}\sum_{i=1}^n\log(1+\exp(-y_{t,i}\mathbf{w}_{t}^{\top}\x_{t,i}))
\end{equation}
which is exp-concave. We conduct the experiments on a classic real-world data set a9a \citep{CC01a}. We scale all feature vectors to the unit ball, and restrict the decision  set $\mathcal{D}$ to be a ball of radius 0.5 and centered at the origin, so that Assumptions 1 and 2 are satisfied. We set batch size  $n=200$, and $T=100$. The regret v.s. time horizon is shown in Figure 1(b). It can be seen that Maler performs better than MetaGrad. Although the worst-case regret bounds of Maler and MetaGrad for exp-concave loss are on the same order, the experimental results are not surprising since Maler enjoys a tighter data-dependant regret bound than that of MetaGrad.
\section{Conclusion and Future Work}
In this paper, we propose a universal algorithm for online convex optimization, which achieves the optimal $O(\sqrt{T})$, $O(d\log T)$ and $O(\log T)$ regret bounds for general convex, exp-concave and strongly convex functions respectively, and enjoys a new type of data-dependent bound. The main idea is to consider different types of learning algorithms and learning rates at the same time. Experiments on online regression and online classification problems demonstrate the effectiveness of our method. In the future, we will investigate whether our proposed algorithm can extend to achieve border adaptivity in various directions, for example, adapting to changing environments \citep{hazan2007adaptive} and adapting to data structures \citep{reddi2018convergence}.
\newpage
\balance
\bibliography{sample2e-2019}
\bibliographystyle{apalike}
\newpage
\onecolumn
\appendix
\section{Proof of Lemma 2}
\begin{proof}
The proof technique is standard, and can be found in \cite{zinkevich2003online,hazan2016introduction}.

First, we prove the regret bound of \eqref{lemma2e1}. Note that by Definition \ref{defn:stconvex},  $s_t^{\eta}(\x)$ is $2\eta^2G^2$-strongly convex. For convince, we denote $\alpha_{t+1}=1/(2\eta^2G^2t)$, $\lambda^s=2\eta^2G^2$, and define the upper bound of the gradients of $s_t^{\eta}(\x)$ as
$$\max_{\x\in\mathcal{D}}\|\nabla s_t^{\eta}(\x)\|=\max_{\x\in\mathcal{D}} \|\eta\mathbf{g}_t+2\eta^2G^2(\mathbf{x}-\mathbf{x}_t)\|\leq G\eta +2\eta^2 G^2D=:G^s.$$
By the update rule of $\x^{\eta,s}_{t+1}$, we have
\begin{equation}
\begin{split}
\|\x^{\eta,s}_{t+1}-\u\|=&\left\|\Pi_{\mathcal{D}}^{I_d}\left(\x^{\eta,s}_{t}
-\alpha_{t+1}\nabla s_t^{\eta}(\x^{\eta,s}_{t})\right)-\u\right\|\\
\leq&\left\|\x^{\eta,s}_{t}
-\alpha_{t+1}\nabla s_t^{\eta}(\x^{\eta,s}_{t})-\u\right\|\\
=&\|\x^{\eta,s}_{t}-\u\|^2+\alpha_{t+1}^2\|\nabla s_t^{\eta}(\x^{\eta,s}_{t}) \|^2-2\alpha_{t+1}(\x^{\eta,s}_{t}-\u)^{\top}\nabla s_t^{\eta}(\x^{\eta,s}_{t}).
\end{split}
\end{equation}
Hence,
\begin{equation}
\label{sc111}
2(\x^{\eta,s}_{t}-\u)^{\top}\nabla s_t^{\eta}(\x^{\eta,s}_{t})\leq\frac{\|\x^{\eta,s}_{t}-\u\|-
\|\x^{\eta,s}_{t+1}-\u\|^2}{\alpha_{t+1}}+\alpha_{t+1}(G^s)^2.
\end{equation}
Summing over $1$ to $T$ and  applying definition 2, we get
\begin{equation}
\begin{split}
2\sum_{t=1}^Ts_t^{\eta}(\mathbf{x}_t^{\eta,s})-2\sum_{t=1}^Ts_t^{\eta}(\u)\leq& \sum_{t=1}^T\|\x^{\eta,s}_{t}-\u\|^2\left(\frac{1}{\alpha_{t+1}}-\frac{1}{\alpha_{t}}-\lambda^s\right)+\left(G^s\right)^2\sum_{t=1}^T\alpha_{t+1}\\
\leq&\frac{\left(G^s\right)^2}{\lambda^s}(1+\log T).
\end{split}
\end{equation}
Note that $\eta\leq \frac{1}{5DG}$. We have
\begin{equation}
\begin{split}
\left(G^s\right)^2&=G^2\eta^2+4\eta^3G^3D+4\eta^4G^4D^2\leq G^2\eta^2+\frac{4\eta^2G^2}{5}+\frac{4\eta^2G^2}{25}\leq 2\eta^2G^2=\lambda^s.
\end{split}
\end{equation}

Next, we prove the regret bound of \eqref{lemma2e2}. We start with the following inequality
\begin{equation}
\begin{split}
\nabla \ell^{\eta}_t(\mathbf{x})(\nabla \ell^{\eta}_t(\mathbf{x}))^{\top}=&\eta^2 \mathbf{g}_t\mathbf{g}_t^{\top}+4\eta^3\mathbf{g}_t(\mathbf{x}-\mathbf{x}_t)^{\top}\mathbf{g}_t\mathbf{g}_t^{\top}+4\eta^4\mathbf{g}_t\mathbf{g}_t^{\top}(\mathbf{x}-\mathbf{x}_t)(\mathbf{x}-\mathbf{x}_t)^{\top}\mathbf{g}_t\mathbf{g}_t^{\top}\\
=&\eta^2 \mathbf{g}_t\mathbf{g}_t^{\top}+\mathbf{g}_t\left(4\eta^3(\mathbf{x}-\mathbf{x}_t)^{\top}\mathbf{g}_t+4\eta^4\left((\mathbf{x}-\mathbf{x}_t)^{\top}\mathbf{g}_t\right)^2\right)\mathbf{g}_t^{\top}\\
\preceq& 2\eta^2 \mathbf{g}_t\mathbf{g}_t^{\top}=\nabla^2 \ell_t^{\eta}(\mathbf{x})
\end{split}
\end{equation}
where $\nabla^2 \ell_t^{\eta}(\mathbf{x})$ denotes the Hessian matrix. The inequality implies that $\nabla^2 \ell_t^{\eta}(\mathbf{x})\succeq\nabla \ell^{\eta}_t(\mathbf{x})(\nabla \ell^{\eta}_t(\mathbf{x}))^{\top}$. According to Lemma 4.1 in \cite{hazan2016introduction}, $\ell_t^{\eta}(\x)$ is 1-exp-concave. Next, we prove that the gradient of $\ell_t^{\eta}(\x)$ can be upper bounded as follows
\begin{equation}
\max_{\x\in\mathcal{D}}\|\nabla \ell_t^{\eta}(\mathbf{x})\|\leq \eta G + 2\eta^2 G^2D\leq \frac{7}{25D}=G^{\ell}.
\end{equation}
By Theorem 4.3 in  \cite{hazan2016introduction}, we have
\begin{equation}
\sum_{t=1}^T\ell_t^{\eta}(\mathbf{x}_t^{\eta,\ell})-\sum_{t=1}^T\ell_t^{\eta}(\u)\leq 5(1+G^{\ell}D)d\log T\leq10d\log T.
\end{equation}

Finally, we prove the regret bound of \eqref{lemma2e3}. Note that the gradient of $c_t(\x)$ is upper bounded by $\max_{\x\in\mathcal{D}}\|\nabla c_t(\x)\|\leq\eta^cG$. Define $m_t=\frac{D}{\eta^cG\sqrt{t}}$. By the convexity of $c_t(\mathbf{x})$, we have $\forall \u\in\mathcal{D}$,
\begin{equation}
\label{substo}
c_t\left(\mathbf{x}_t^c\right)-c_t\left(\u\right)\leq \left(\mathbf{x}_t^c-\u\right)^{\top}\nabla c_t\left(\mathbf{x}_t^c\right).
\end{equation}
On the other hand, according to the update rule of $\mathbf{x}_{t+1}^c$, we have
\begin{equation}
\begin{split}
\|\mathbf{x}_{t+1}^c-\u\|^2=&\|\Pi^{I_d}_{\mathcal{D}}\left(\mathbf{x}^c_t-m_t\nabla c_t(\mathbf{x}^c_t)\right)-\u\|^2\\
\leq&\|\x_t^c-m_t\nabla c_t\left(\x_t^c\right)-\u\|^2\\
=&\|\x_t^c-\u\|^2+m_t^2\|\nabla c_t\left(\x_t^c\right)\|^2-2 m_t \left(\x_t^c-\u\right)^{\top}\nabla c_t\left(\x_t^c\right)
\end{split}
\end{equation}
where the inequality follows from Theorem 2.1 in \cite{hazan2016introduction}. Hence,
\begin{equation}
\begin{split}
&2\left(\x_t^c-\u\right)^{\top}\nabla c_t\left(\x_t^c\right)\\
\leq&\frac{\|\x_t^c-\u\|^2-\|\x_{t+1}^c-\u\|^2}{m_t}+m_t\|\nabla c_t\left(\x_t^c\right)\|^2\\
\leq& \frac{\|\x_t^c-\u\|^2-\|\x_{t+1}^c-\u\|^2}{m_t}+m_t(\eta^cG)^2
\end{split}
\end{equation}
Substituting the above inequality into \eqref{substo} and summing over $T$, we have
\begin{equation}
\begin{split}
\sum_{t=1}^T c_t(\x_t^c)-c_t(\u)\overset{\eqref{defn:convex}}{\leq}& \sum_{t=1}^T\left(\x_t^c-\u\right)^{\top}\nabla c_t\left(\x_t^c\right)\\
\leq&\frac{1}{2}\sum_{t=1}^T\|\x^c_t-\u\|^2\left(\frac{1}{m_t}-\frac{1}{m_{t-1}}\right)+\frac{(\eta^cG)^2}{2}\sum_{t=1}^Tm_t\\
\leq&D^2\frac{1}{2m_T}+\frac{(\eta^cG)^2}{2}\sum_{t=1}^Tm_t\\
\leq& \frac{3}{2}\eta^cGD\sqrt{T}\leq \frac{3}{4}
\end{split}
\end{equation}
where the last inequality is due to $\eta^c=\frac{1}{2GD\sqrt{T}}$.
\end{proof}
\end{document}